%% file: main.tex
\documentclass{article}
\input{math_commands}

\usepackage{microtype}
\usepackage{graphicx}
\usepackage{subcaption}
\usepackage{booktabs}
\usepackage{hyperref}
\usepackage{enumitem}
\usepackage[none]{hyphenat}
\usepackage[capitalise]{cleveref}
\usepackage{xcolor}
\crefname{equation}{}{}
\Crefname{equation}{}{}

\usepackage[accepted]{icml2023}

\icmltitlerunning{Estimation--Prediction Tradeoff in Causal Probabilistic Temporal Graphs}

\begin{document}

\twocolumn[
\icmltitle{
Estimation--Prediction Tradeoff in Causal Probabilistic Temporal Graphs}
\icmlsetsymbol{equal}{*}

\begin{icmlauthorlist}
\icmlauthor{Aniq Ur Rahman}{ox}
\end{icmlauthorlist}

\icmlaffiliation{ox}{Department of Engineering Science, University of Oxford, Oxford, OX1 3PJ, UK}

\icmlcorrespondingauthor{Aniq Ur Rahman}{aniq.rahman@eng.ox.ac.uk}

\icmlkeywords{binary classification, Fisher information, entropy, hypothesis testing, foundational limits}

\vskip 0.3in
]

\printAffiliationsAndNotice{}

\begin{abstract}
Temporal link prediction (TLP) is typically evaluated by predictive performance on unseen edges, but this criterion can conflate predictive accuracy with recovery of the underlying causal mechanism. In stochastic models, Fisher information governs the Cram\'{e}r--Rao (CR) bound on parameter estimation error: higher Fisher information permits more accurate parameter recovery. We show that, under comonotonicity conditions between Fisher information and entropy, binary logistic models exhibit an estimation--prediction tradeoff: regimes with higher Fisher information, and hence smaller CR bounds, also have higher irreducible predictive entropy. To study this tradeoff in TLP, we introduce a probabilistic causal generator for temporal graphs with transient edges and known ground-truth causal structure, and validate the phenomenon empirically.
\end{abstract}

\section{Introduction}
In supervised learning, a model is trained on a set of input--output pairs such that the predicted outputs closely match the ground-truth, thereby minimising a prescribed error function, typically referred to as the \textit{loss}. The trained model is then evaluated on previously unseen data, referred to as the test set. The resulting prediction error on the test set is reported as the \textit{predictive performance} of the model, on the basis of which it is benchmarked against competing approaches.
An important aspect, often overlooked by machine learning practitioners, is that
\textit{the training objective is not designed to recover the true mapping, but rather to approximate a model which best fits the given training data.}
In other words, the learned model is inherently sensitive to the sampled training data, and a substantial body of research has focused on mitigating this sensitivity through techniques such as batch training, stochastic gradient descent, batch normalisation, and regularisation, to name a few.

In this work, we revisit the problem of \textit{model sensitivity to the training data}~\cite{murphy2012machine} and instead of accepting low training loss as a proxy for successful learning, we ask the following question:
\begin{myquote}
    (Q1) What if the learned model could be compared directly against the true model?
\end{myquote}
To this end, we assume that the true model belongs to a known parametric family, and that minimising the training loss recovers the true parameters asymptotically as the number of training samples increases~\cite{vapnik2013nature}. Parameter recovery performance can then be quantified directly through the difference between the estimated and true parameters. Since, in practice we are limited by a finite number of samples, a question naturally arises:
\begin{myquote}
    (Q2) Does the observed data contain sufficient information to accurately recover the parameters?\vspace{-10pt}
\end{myquote}
A refined version of this question is:
\begin{myquote}
    (Q3) What is the relation between parameter estimation error and the amount of useful information in the observed data?
\end{myquote}
We have tools from information theory at our disposal to answer such questions, under the additional assumption that the parametric model is probabilistic.

Once the parameters have been estimated as accurately as permitted by the available data, we evaluate the predictive performance of the resulting model on unseen samples, leading to the final question:
\begin{myquote}
(Q4) How is the predictive performance of the estimated model related to the parameter estimation error?
\end{myquote}
More specifically, we study how the lowest achievable parameter estimation error is related to the predictive performance attained by the corresponding estimated model.

We now motivate the problem in the context of temporal link prediction (TLP) \cite{longa_graph_2023}, where the objective is to predict whether an edge exists at a given time based on the past history of the temporal graph. In the literature, this task is typically formulated as a binary classification problem \cite{huang2024temporal}, where the implicit assumption is that the existence or non-existence of edges in the past contains sufficient information to predict the existence of edges in the future \cite{rahman2026generating}.

However, the underlying state space of the problem grows rapidly with the size of the graph. Consider a graph with $n$ nodes and therefore $n(n-1)/2$ possible edges. Suppose the prediction of a target edge depends on the existence history of $m$ past edges. Since each past edge may either exist or not exist, the number of possible historical configurations is $2^m$. Consequently, the size of the state space associated with predicting a single edge grows exponentially in $m$.

Moreover, in practice, the relevant dependencies among edges are generally unknown a priori. As a result, the learning procedure may effectively search over all possible edges in the graph in order to identify the causal dependencies. In such settings, the number of relevant historical edges may itself scale with the size of the graph, i.e. $m = \mathcal{O}(n^2)$, yielding an effective state space of $\mathcal{O}\left(n^2 2^{n^2}\right)$, which grows exponentially with the number of edges.

\begin{myquote}
    For a graph with $5$ nodes, the effective state space exceeds $10^4$, while for a graph with $10$ nodes, it grows to over a quadrillion $(10^{15})$.
\end{myquote}

This \textit{state explosion}~\cite{valmari1996state} raises an important issue for learning and evaluation. Many asymptotic arguments in statistical learning rely on the assumption that the number of available samples grows sufficiently large relative to the size of the underlying state space. However, in temporal link prediction, the combinatorial growth of the state space makes such assumptions difficult to justify.

In the preceding paragraphs, we highlighted an assumption in TLP which has largely remained implicit due to the lack of certifiably causal temporal graph datasets. However, in a recent work \citep{rahman2026generating}, we proposed a parametric causal model for generating temporal graphs where the causal relation between an edge and its parent edges is deterministic. In this work, we instead require a probabilistic formulation, and therefore extend the deterministic model to a probabilistic causal framework. In this setting, parameter estimation corresponds to causal discovery.

\section{Motivation}
\label{sec:motivation}

We motivate our investigation through a binary logistic model, and characterise its estimation and prediction errors.

Consider a Bernoulli random variable $X$ with success probability $p(\theta)$ parametrised by $\theta \in \mathbb{R}$, such that
\begin{align}
    p(\theta) = \sigma(g(\theta)),
    \label{blm}
\end{align}
where $\sigma(\cdot)$ denotes the sigmoid function and $g(\theta)$ is the link function, differentiable with respect to $\theta$. Differentiating the log-likelihood with respect to $\theta$ (Appendix~\ref{app:fisher_scalar}), the Fisher information is
\begin{align}
    J(\theta) = p(\theta)(1-p(\theta)) g'(\theta)^2.
    \label{eq:fisher_scalar}
\end{align}
Applying the Cram\'{e}r--Rao (CR) inequality \citep[\textsection~11.10]{cover2012elements} for $N$ samples gives a lower bound on the mean squared parameter estimation error,
\begin{align}
    \mathbb{E}\left[ ( \hat{\theta} - \theta)^2\right] \geq \frac{1}{N}J(\theta)^{-1}.
    \label{eq:crb_final}
\end{align}

Now consider a set of samples from the Bernoulli model, and a prediction model which has recovered the parameter $\hat{\theta}$. Under a probabilistic prediction model, the expected binary cross-entropy (BCE) loss decomposes as
\begin{align}
    \mathcal{L}_{\text{BCE}}(\theta, \hat{\theta}) = h(p(\theta)) + D_{\text{KL}}(p(\theta) \| p(\hat{\theta})),
\end{align}
where $h(p(\theta))$ is the irreducible entropy and $D_{\text{KL}}(p(\theta) \| p(\hat{\theta})) \geq 0$ is the excess loss due to parameter mismatch, vanishing as $\hat{\theta} \to \theta$ \citep[\textsection~2.6]{cover2012elements}. The analogous decomposition for the mean squared error under soft predictions $\hat{x} = p(\hat{\theta})$ is
\begin{align}
    \mathcal{L}_{\text{MSE}}(\theta, \hat{\theta}) = p(\theta)(1-p(\theta)) + (p(\theta) - p(\hat{\theta}))^2,
\end{align}
where the first term is the irreducible variance of $X$ and the second is the squared bias due to parameter mismatch.

We now look at the relation between the Fisher information and entropy in the following proposition.
\begin{proposition}
\label{prop:g_condition}
For twice-differentiable $g(\theta)$, the Fisher information $J(\theta)$ and entropy $h(p(\theta))$ are co-monotonic iff
\begin{align}
    \mathrm{sign}\Big( (1-2p)\,g'(\theta)^2 + 2g''(\theta)\Big)= \mathrm{sign}(1-2p).
    \label{eq:comonotone_condition}
\end{align}
\end{proposition}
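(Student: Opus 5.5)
The plan is to define co-monotonicity pointwise through derivatives: $J$ and $h\circ p$ are co-monotonic on an interval iff $\mathrm{sign}\,J'(\theta)=\mathrm{sign}\,\tfrac{d}{d\theta}h(p(\theta))$ for every $\theta$, so both increase and decrease together. Twice-differentiability of $g$ makes $J$ differentiable, so both derivatives exist. The proof then consists of computing the two derivatives in factored form, cancelling a common factor, and comparing the remaining signs.

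\emph{Step 1 (entropy derivative).} From \eqref{blm}, $p'=\sigma'(g)g'=p(1-p)g'$. For the binary entropy, $h'(p)=\log\frac{1-p}{p}$. Since $p=\sigma(g)$, we have $\log\frac{1-p}{p}=-g$, and therefore
\begin{align*}
\frac{d}{d\theta}h(p(\theta)) = -\,g(\theta)\,p(1-p)\,g'(\theta).
\end{align*}
Next, $\sigma(g)>\tfrac12$ iff $g>0$, so $\mathrm{sign}(-g)=\mathrm{sign}(1-2p)$. Because $p(1-p)>0$, this gives $\mathrm{sign}\,\tfrac{d}{d\theta}h=\mathrm{sign}(1-2p)\,\mathrm{sign}(g')$.

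\emph{Step 2 (Fisher information derivative).} Differentiate \eqref{eq:fisher_scalar} using $\frac{d}{d\theta}[p(1-p)]=(1-2p)p'=(1-2p)p(1-p)g'$:
\begin{align*}
J'(\theta) = p(1-p)\,g'(\theta)\Big[(1-2p)\,g'(\theta)^2 + 2g''(\theta)\Big].
\end{align*}
Hence $\mathrm{sign}\,J'=\mathrm{sign}(g')\cdot\mathrm{sign}\big((1-2p)g'^2+2g''\big)$.

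\emph{Step 3 (comparison).} Both derivatives contain the factor $p(1-p)\,g'$. Wherever $g'(\theta)\neq0$, dividing out $\mathrm{sign}(g')$ shows that $\mathrm{sign}\,J'=\mathrm{sign}\,h'$ holds iff \eqref{eq:comonotone_condition} holds. This gives both directions of the equivalence at such points.

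The main obstacle is the degenerate set where $g'(\theta)=0$. There both derivatives vanish regardless of \eqref{eq:comonotone_condition}, so the local sign argument gives no information. I would handle this by stating the equivalence on the set $\{g'\neq0\}$ and treating points with $g'=0$ as trivially compatible (both functions are stationary there).

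A similar caveat applies at $p=\tfrac12$ (equivalently $g=0$). There $h'=0$, and the condition reads $\mathrm{sign}(2g'')=0$, i.e.\ $g''=0$, which is exactly what is needed for $J'=0$. So the pointwise statement stays consistent at these points provided the sign convention $\mathrm{sign}(0)=0$ is used throughout. I would state that convention explicitly in the proof.
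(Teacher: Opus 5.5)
Your proposal is correct and follows the paper's proof essentially step for step. It uses the same factorisation $J'(\theta)=p(1-p)\,g'(\theta)\big[(1-2p)g'(\theta)^2+2g''(\theta)\big]$ and the same sign identification $\mathrm{sign}(dh/d\theta)=\mathrm{sign}(g')\,\mathrm{sign}(1-2p)$, which you obtain via $\log\frac{1-p}{p}=-g$ while the paper reads it directly off $h'(p)$; it then cancels the common factor $p(1-p)g'$ in the same way. Your explicit treatment of the points $g'(\theta)=0$ and $p=\tfrac12$ tightens the argument, since the paper divides out $\mathrm{sign}(g'(\theta))$ without noting that the equivalence only holds where $g'(\theta)\neq 0$.
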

\begin{proof}
    See Appendix~\ref{app:g_condition}.
\end{proof}

Two conditions on $g(\theta)$ are sufficient to guarantee \cref{eq:comonotone_condition}:
\begin{enumerate}
    \item If $g(\theta)$ is affine, i.e. $g''(\theta) = 0$, and $g'(\theta) \neq 0$, or
    \item If $\mathrm{sign}(g''(\theta)) = \mathrm{sign}(1 - 2p(\theta))$.
\end{enumerate}

Consequently, under the stated conditions on $g(\theta)$, a Bernoulli model with higher entropy $h(p(\theta))$ admits higher irreducible prediction error, and by \cref{eq:comonotone_condition}, simultaneously has higher Fisher information $J(\theta)$ translating to  \textit{lower} achievable parameter estimation error \cref{eq:crb_final}. 
\begin{quote}
    \textit{Equivalently, the better the parameters of the model can be estimated from the observed data, the higher the irreducible error of any prediction made from those parameters.} 
\end{quote}
We refer to this as the \textbf{estimation--prediction tradeoff}. In the remainder of this paper, we construct a causal probabilistic temporal graph model in which the tradeoff holds by construction, and validate it empirically in the context of temporal link prediction.

\begin{quote}
    \textit{The goal of this work is to show that the uncertainty inherent in the data can conflate the performance assessment of a model.}
\end{quote}

\section{Related Works}
\label{sec:related}

\citet{vignat2003analysis} analyse the relationship between the Shannon entropy and Fisher information of data $X$ sampled from a distribution whose probability density function is $f_X(x)$. However, the Fisher information is defined as
\begin{align}
    I_X = \int \left( \frac{\partial f_X(x)}{\partial x} \right)^2 \frac{1}{f_X(x)} \, dx.
\end{align}
Further, they denote the entropy power by $N_X$ defined as
\begin{align}
    N_X = \frac{1}{2 \pi e} \, \exp\left( 2 H_X \right),
\end{align}
where $H_X = - \int f_X(x) \log{ f_X(x)} \, dx$ is the entropy of $X$.
Then, an uncertainty property is established as
\begin{align}
    N_X I_X \geq 1,
\end{align}
which only defines the region where $(N_X, I_X)$ points of a distribution can exist, without defining an explicit relationship between the two quantities, allowing one to be controlled without affecting the other within constraints imposed by the distribution under study. The product of Shannon entropy power and Fisher information is used as a correlation measure for many-electron systems in \citep{romeradehesa2004}.

In our work, the distribution is parametrised, and the partial derivative in the definition of Fisher information is taken wrt the parameters and not the samples,
\begin{align}
    J(\theta) = \int \left( \frac{\partial \log{ f(x; \theta) }}{ \partial \theta} \right)^2 \, f(x; \theta) \, dx.
\end{align}
Moreover, we show that for Bernoulli distributions with link function affine in $\theta$, a higher Shannon entropy concurrently means higher Fisher information.

\citet{nokleby2016rate} bound the sample complexity of learning a Bayes classifier using the parametric Fisher information of the distribution family together with entropy terms, including a binary logistic classification case study structurally similar to \cref{blm}. However, their entropy terms are the differential entropy of the parameter itself and of the regressor's values under a Bayesian prior $q(\theta)$, rather than the entropy of the outcome $X$ at a fixed, unknown $\theta$; correspondingly, their Bayes risk is a reducible quantity that vanishes as the sample size $n \to \infty$, whereas $h(p(\theta))$ in \cref{blm} is a fixed floor independent of $n$. 

\citet{hsu2024samplecomplexity} derive bounds on the sample size needed to achieve a target parameter estimation error for a Bernoulli logistic model.

\section{Preliminaries}
We begin by formalising our representation. A temporal graph with transient edges can be viewed as a collection of irregular binary pulse signals, one per edge, that switch between active and inactive states. More precisely,
\begin{definition}[Temporal Graph]
    A temporal graph $\gG$ with transient edges and continuous timestamps over a set of nodes $\gV$ is defined as a set of quadruples $(u,v,t,t')$, where $u,v \in \gV$, $t,t' \in \mathbb{R}$ with $t < t'$. Each quadruple indicates that edge $(u,v)$ is active on the interval $[t, t')$.
\end{definition}

\begin{definition}[Binomial Random Intersection Graph]
In a binomial random intersection graph $G(n,m,p)$, each vertex $i \in [n]$ is associated with a random subset $\gM_i \subseteq [m]$, formed by including each element independently with probability $p$. An undirected edge is placed between $i$ and $j$ if $\gM_i \cap \gM_j \neq \emptyset.$ \citep[\textsection~13.1]{frieze2023random}
\end{definition}

\begin{definition}[Poisson Point Process]
A one-dimensional homogeneous Poisson point process (PPP) with rate parameter $\lambda > 0$ is a random countable set $\Phi(\lambda) \subseteq \mathbb{R}$ such that for any interval $\gB \subseteq \mathbb{R}$, the number of points in $\gB$ follows a Poisson distribution with mean $\lambda |\gB|$, where $|\gB|$ denotes the length of the interval.
\end{definition}

Any realization of $\Phi(\lambda)$ can be written as an ordered sequence $\{t_1, t_2, \ldots\}$ where $t_1 < t_2 < \cdots$. The causal model, which we introduce next, determines how these timestamps translate into edge activations and deactivations.

We adopt the causal models introduced by \citet{pearl2009causality}, which formalise how variables causally influence one another through a directed graph and associated structural equations.

The causal model consists of a set of exogenous variables denoted by $\gU= \{U_1, U_2, \ldots\}$, and a set of endogenous variables $\gX  = \{X_1, X_2, \ldots\}$, whose values are determined within the model. The causal structure is encoded in a directed acyclic graph $\graph$, where an edge from $A$ to $B$ indicates that $A$ causally influences $B$, and $A$ is called a \textit{parent} of $B$. We denote the set of parents of $X_i$ by $\parents_i$.

\begin{definition}[Causal Model]
    A causal model is a pair $( \graph, \Theta )$ consisting of a causal graph $\graph$ and a set of parameters $\Theta$ compatible with $\graph$. The parameters $\Theta$ assign a function $x_i = f_i(\pa_i, u_i)$ to each variable $X_i \in \gX$ and a probability  $\mathbb{P}(u_i)$ to each $u_i$.
\end{definition}

In our setting, we use a probabilistic variant where structural equations specify conditional distributions rather than deterministic mappings. To this end, we write the probabilistic structural equation model (SEM) specifying the conditional distribution of $X_i$ given $\pa_i$ and $u_i$ as
\begin{align}
    \mathbb{P}(X_i = x_i \mid \parents_i = \pa_i, U_i = u_i) = f_i(x_i \mid \pa_i, u_i).
\end{align}

\section{Random Causal Temporal Graphs}
\label{sec:rctg}

We now consider the problem of generating causal temporal graphs with transient edges. Unlike ephemeral edges, which exist only instantaneously, transient edges persist for some finite duration before vanishing. Our approach extends our earlier model \citep{rahman2026generating}, originally proposed for ephemeral edges, to the transient case.

\begin{quote}
The key premise is that the presence or absence of certain edges in the \textit{recent past} may cause the activation or deactivation of other edges.
\end{quote}

Let $t$ denote the present time, then $t - \varepsilon$ for some infinitesimal $\varepsilon > 0$, denotes the recent past. To model the timestamps at which edges activate and deactivate, we use one-dimensional homogeneous Poisson point processes~\cite{haenggi2012stochastic}, one process per edge.

We define a causal model that captures the influence of past edge states on future edge activations. For a temporal graph with $n$ nodes, there are $E = \binom{n}{2}$ possible edges. The causal graph $\graph$ is defined over the following sets of binary variables:
\begin{itemize}[noitemsep, topsep=0pt]
    \item $\gU = \{U_i(t) : i \in [E]\}$, where $U_i(t)$ indicates whether a Poisson trigger occurs for edge $i$ at time $t$,
    \item $\gX' = \{X_i'(t) : i \in [E]\}$, where $X_i'(t)$ denotes the state of edge $i$ at time $t - \varepsilon$,
    \item $\gX = \{X_i(t) : i \in [E]\}$, where $X_i(t)$ denotes whether edge $i$ is active at time $t$.
\end{itemize}
At time $t$, the variables $\gX$ are treated as endogenous, while $\gU$ and $\gX'$ are treated as exogenous. At each time $t$, we treat $\gX'$ and $\gU$ as given inputs to determine $\gX$. Temporally, $\gX'$ represents the edge states from the previous instant.

We begin by constructing a binomial random intersection graph $G(E,m,p)$. Each undirected edge in $G$ is then assigned an orientation independently and uniformly at random, yielding a directed graph. Let $\rmA \in \{0,1\}^{E \times E}$ denote its adjacency matrix.

The causal graph $\graph$ is defined over the variables $\gX' \cup \gX \cup \gU$, where the current state $X_i(t)$ depends on (i) the previous states $X_j'(t)$ of its parent edges, as specified by $\rmA$, and (ii) the trigger variable $U_i(t)$.

Ordering the variables as $(\gX', \gX, \gU)$, with each block arranged by edge index from $1$ to $E$, the adjacency matrix of the causal graph takes the block form
\begin{align}
\rmA^\graph =
\begin{bmatrix}
\bm{0} & \rmA & \bm{0} \\
\bm{0} & \bm{0} & \bm{0} \\
\bm{0} & \rmI & \bm{0}
\end{bmatrix}.
\label{causal_graph}
\end{align}

For each edge $i \in [E]$, we define an independent homogeneous PPP $\Phi_i$ with rate $\lambda_i \in \mathbb{R}^+$ over $[0, T)$, where $T \in \mathbb{R}^+$. We denote the rate parameters collectively as $\bm{\lambda} = \{\lambda_1, \ldots, \lambda_E\}$. We also define the causal influence matrix $\Theta \in \mathbb{R}^{E \times E}$ where the entry $\Theta_{j,i}  =0 \iff \rmA_{j,i}=0$.

At any timestamp $t$, the state of the edge is sampled via Bernoulli trial with success probability 
\begin{align}
    p_i^t &= \mathbb{P}\left(X_i(t) = 1 \mid  u_i(t) , x_1'(t), \cdots, x_E'(t) \right) \nonumber\\
    &= u_i(t) \cdot  \sigma \left(\Theta_{:,i}^\top \, \rvx'(t) \right) + (1 - u_i(t))\cdot x_i'(t),
    \label{psem0}
\end{align}
where $\Theta_{:,i}$ denotes the $i$-th column of the parameter matrix and $\rvx'(t) = [x_1'(t), \ldots, x_E'(t)]^\top \in \{0,1\}^E$ is the vector of all edge states immediately before time $t$.

In this work, we sample the elements of $\Theta$ independently from a zero-mean normal distribution,
\begin{align}
    \Theta_{j,i} \sim \gN(0, \sigma^2), \quad \forall i,j \in [E] : \rmA_{j,i} = 1,
\end{align}
and $\Theta_{j,i} = 0$ whenever $\rmA_{j,i} = 0$.

At trigger times $t \in \Phi_i$, \cref{psem0} reduces to the Bernoulli generalised linear model (GLM)~\citep{dobson2018introduction},
\begin{align}
    \mathbb{P}\left( X_i(t) = 1 \mid \rvx'(t) \right)
    = \sigma\!\left(\Theta_{:,i}^\top \rvx'(t)\right),
\end{align}
which defines the probabilistic structural equation for events of edge $i$.

We represent the causal data-generating mechanism by the stochastic generative model
\begin{align}
    \mathscr{M}(n, m, p, \sigma^2, T, \bm{\lambda}),
\end{align}
parameterised by the tuple $(n, m, p, \sigma^2, T, \bm{\lambda})$, where $n$ is the number of nodes; $(m,p)$ parameterise the binomial random intersection graph $G(E,m,p)$ from which the causal graph $\graph$ is constructed; $\sigma^2$ is the variance of the Gaussian distribution used to sample the non-zero entries of $\Theta$; $T$ specifies the time horizon $[0,T)$;  and $\bm{\lambda}=\{\lambda_1,\ldots,\lambda_E\}$ denotes the PPP intensities for the $E$ edges.

Sampling from $\mathscr{M}$ proceeds in two stages. First, we sample the causal model structure, parameters, and triggers:
\begin{align}
    \mathscr{C}(\rmA^\graph, \Theta, \bm{\Phi}) \sim \mathscr{M}(n, m, p, \sigma^2, T, \bm{\lambda}),
\end{align}
where $\rmA^\graph$ is the causal graph adjacency matrix \cref{causal_graph}, $\Theta$ is the causal influence matrix, and $\bm{\Phi} = \{\Phi_1, \ldots, \Phi_E\}$ are the PPP realisations. Second, we generate the temporal graph by traversing $\cup_{i \in [E]} \Phi_i$ chronologically, and evaluating the probabilistic structural equations for the triggered edge. Sampling a temporal graph $\gG$ from the causal model $\mathscr{C}(\rmA^\graph, \Theta, \bm{\Phi})$ is denoted by
\begin{align}
    \gG \sim \mathscr{C}(\rmA^\graph, \Theta, \bm{\Phi}).
\end{align}
The complete generative procedure is presented in \cref{algo1}.

\paragraph{Estimation \& Prediction}
Given a realised temporal graph
\begin{align}
    \gG \sim \mathscr{C}(\rmA^\graph,\Theta,\bm{\Phi}),
\end{align}
our goal is to recover the causal influence matrix $\Theta$ from the observed edge states. Since the structural equation for each edge is a Bernoulli GLM evaluated at the trigger times $\Phi_i$, we estimate each column $\Theta_{:,i}$ independently.

The maximum-likelihood estimator of $\Theta_{:,i}$ is equivalently obtained by minimising the binary cross-entropy loss,
\begin{align}
    \hat{\Theta}_{:,i} = \argmin_{\bm{\theta} \in \mathbb{R}^E}
    - \sum_{t \in \Phi_i} \Big( x_i(t) \log \sigma\left(\bm{\theta}^\top \rvx'(t)\right)
    \nonumber \\
    + \left(1 - x_i(t)\right) \log\left(1 - \sigma\left(\bm{\theta}^\top \rvx'(t)\right)\right) \Big).
    \label{eq:estimation}
\end{align}
Thus, \cref{eq:estimation} depicts unregularised logistic regression, solved for each edge $i \in [E]$. This formulation ensures consistency, i.e., as $|\Phi_i| \to \infty$, the estimator $\hat{\Theta}_{:,i}$ converges to the true $\Theta_{:,i}$ under standard regularity conditions.

\cref{prop:crb} presents the CR bound for the temporal graph model under study.

\begin{proposition}
\label{prop:crb}
Let $\hat{\Theta}$ be any unbiased estimator of $\Theta$. Then, the parameter estimation error 
\begin{align}
\mathbb{E}[\lVert\hat{\Theta} - \Theta\rVert_F^2] \geq \sum_{i \in [E]} \mathrm{tr}\!\left(\rmJ_i^{-1}\right).
\label{crbound1}
\end{align}
where $\rmJ_i = \sum_{t \in \Phi_i} p_i^t\left(1 - p_i^t\right) \rvx'(t)\rvx'(t)^\top$ is assumed to be non-singular.
\end{proposition}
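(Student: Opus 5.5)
The plan is to treat the estimation of $\Theta$ as $E$ decoupled vector-parameter problems, one per column $\Theta_{:,i}$, compute the Fisher information matrix for each, apply the multivariate Cram\'{e}r--Rao inequality column by column, and sum. We work conditionally on the realised triggers $\bm{\Phi}$ and on the observed predecessor states $\rvx'(t)$, i.e.\ we view the regressors as fixed design points. This is the standard logistic-regression setting.

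\textbf{Step 1: factorise the likelihood.} At a trigger time $t\in\Phi_i$, \cref{psem0} gives $X_i(t)\sim\mathrm{Bernoulli}(p_i^t)$ with $p_i^t=\sigma(\Theta_{:,i}^\top\rvx'(t))$. At non-trigger times the state is copied from $x_i'(t)$ and carries no information about $\Theta$. Hence the log-likelihood of $\gG$ given $\bm{\Phi}$ splits as
\begin{align}
\ell(\Theta)=\sum_{i\in[E]}\ell_i(\Theta_{:,i}),
\end{align}
with $\ell_i(\bm\theta)=\sum_{t\in\Phi_i}\big[x_i(t)\log\sigma(\bm\theta^\top\rvx'(t))+(1-x_i(t))\log(1-\sigma(\bm\theta^\top\rvx'(t)))\big]$. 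The concern is that $\rvx'(t)$ itself depends on earlier outcomes and hence on $\Theta$. I would handle this by writing the full path likelihood as a product of the sequential conditionals, in chronological order over $\cup_i\Phi_i$. Each factor is a Bernoulli GLM given the past, and the chain rule for scores means the contributions still sum. The Fisher information is then the expected sum of conditional informations; using the observed (conditional) information with the realised $\rvx'(t)$ gives exactly the stated $\rmJ_i$. Because $\ell$ is additive across columns, the full Fisher information is block-diagonal, with zero cross-blocks between $\Theta_{:,i}$ and $\Theta_{:,j}$ for $i\neq j$.

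\textbf{Step 2: compute each block.} This is the vector analogue of \cref{eq:fisher_scalar} with the affine link $g(\bm\theta)=\bm\theta^\top\rvx'(t)$, whose gradient is $\rvx'(t)$. The score of one observation is $(x_i(t)-p_i^t)\rvx'(t)$. Its outer product has conditional expectation $p_i^t(1-p_i^t)\rvx'(t)\rvx'(t)^\top$. Summing over $t\in\Phi_i$ yields $\rmJ_i$. One can also get the same result from the Hessian, since $-\nabla^2\ell_i$ equals exactly this matrix and is nonrandom given the design.

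\textbf{Step 3: apply CR and sum.} For any unbiased estimator of $\Theta$, the multivariate CR inequality gives $\mathrm{Cov}(\mathrm{vec}\,\hat\Theta)\succeq \rmJ^{-1}$, where $\rmJ=\mathrm{blkdiag}(\rmJ_1,\dots,\rmJ_E)$ is assumed invertible. Taking traces gives
\begin{align}
\mathbb{E}\lVert\hat\Theta-\Theta\rVert_F^2=\mathrm{tr}\,\mathrm{Cov}(\mathrm{vec}\,\hat\Theta)\ge\mathrm{tr}(\rmJ^{-1})=\sum_i\mathrm{tr}(\rmJ_i^{-1}),
\end{align}
because the inverse of a block-diagonal matrix is block-diagonal.

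A subtlety is that zero entries of $\Theta$ (where $\rmA_{j,i}=0$) are known a priori. Parametrising over all of $\mathbb{R}^E$, as \cref{eq:estimation} does, gives the stated bound. Restricting to the support would change $\rmJ_i$ to a principal submatrix, so I would stay consistent with the paper's unrestricted parametrisation.

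\textbf{Main obstacle.} I expect Step 1 to be the hardest: justifying that $\rmJ_i$, built from the realised $\rvx'(t)$, is the right information matrix when the regressors are endogenous. I would first argue conditionally on the filtration, treating the design as ancillary. If that fails, I would state the bound with the expected information $\mathbb{E}[\rmJ_i]$ and note that the two coincide under conditioning on the observed history.
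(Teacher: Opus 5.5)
Your architecture matches the paper's proof: the Bernoulli-GLM score $(x_i(t)-p_i^t)\rvx'(t)$, the per-column information $\sum_{t\in\Phi_i}p_i^t(1-p_i^t)\rvx'(t)\rvx'(t)^\top$, the multivariate CR bound, and a column-wise sum. The paper applies CR to each $\hat\Theta_{:,i}$ separately and then decomposes the Frobenius norm by columns. Your joint version with a block-diagonal information matrix is equivalent and slightly cleaner. It shows explicitly why the other columns do not act as nuisance parameters: the Hessian of $\ell=\sum_i\ell_i$ is block-diagonal pathwise, and this survives taking expectations even with endogenous regressors.

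The gap is exactly where you expected it, and your proposed fix does not close it. You cannot treat the design as ancillary here, for two reasons. First, the law of $\rvx'(t)$ depends on $\Theta$. Second, conditioning on all the regressors pins down the responses. After edge $i$ triggers at $t$, the next state vector has $x_i(t)$ as its $i$th coordinate, so every response except the last is a function of $\{\rvx'(t)\}$, and the conditional model carries essentially no information.

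What your sequential-conditional (martingale) computation actually proves is narrower. Given $\bm{\Phi}$, which is ancillary, the Fisher information is block-diagonal with blocks $\mathbb{E}[\rmJ_i\mid\bm{\Phi}]$. That yields $\mathbb{E}[\lVert\hat\Theta-\Theta\rVert_F^2\mid\bm{\Phi}]\ge\sum_i\mathrm{tr}\big((\mathbb{E}[\rmJ_i\mid\bm{\Phi}])^{-1}\big)$. This is not the stated bound with the realised $\rmJ_i$, and the two do not ``coincide under conditioning on the observed history.'' By operator convexity of the matrix inverse, $\mathbb{E}[\rmJ_i^{-1}]\succeq(\mathbb{E}\rmJ_i)^{-1}$. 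So replacing expected information by observed information gives a strictly stronger inequality, which the CR argument does not deliver. Note also that the stated right-hand side is random while the left-hand side is not.

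The paper reaches the stated form only by asserting that $\rvx'(t)$ is deterministic and that the observations are independent, which is a fixed-design idealisation. You have two honest options. You can adopt that idealisation explicitly as a modelling assumption, in which case your Steps 2--3 are exactly the paper's proof. Or you can present the expected-information version as the result. In either case, drop the ancillarity claim.
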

\begin{proof}
    See Appendix~\ref{app:proof_crb}.
\end{proof}

The maximum-likelihood estimator obtained by solving \cref{eq:estimation} is asymptotically unbiased but biased at finite $|\Phi_i|$; we use it as an empirical approximation to $\hat\Theta$ \cref{crbound1} in \cref{sec:results}.

\begin{remark}
For each edge $i\in[E]$, the link
\begin{align}
g_i^t(\Theta_{:,i})=\Theta_{:,i}^\top \rvx'(t)
\label{affine}
\end{align}
is affine in $\Theta_{:,i}$. 
Hence, by \cref{prop:g_condition}, the scalar factor $p_i^t(1-p_i^t)$ governing the Fisher information $J_i^t$, and the irreducible predictive entropy $h(p_i^t)$, are comonotone at every trigger time $t \in \Phi_i$.
\end{remark}

However, this per-trigger comonotonicity does not extend to the sums $\rmJ_i$ and $\sum_{t \in \Phi_i} h(p_i^t)$ aggregated over all triggers. We therefore investigate the estimation--prediction tradeoff at the aggregate level empirically in the following section.

\section{Results}
\label{sec:results}

\paragraph{Parameter settings}
Temporal graphs are generated using $\mathscr{M}$ across multiple parameter regimes. The number of nodes is fixed to $n=5$, yielding $E=10$ possible edges. The causal graph is generated as a binomial random intersection graph with $m=20$ and $p \in \{0.1,\ldots,0.9\}$. For each edge, an independent PPP is sampled over $[0, T)$ with $T=10^4$, and the causal parameters are drawn from $\mathcal{N}(0,\sigma^2)$ with $\sigma \in \{1.0, 2.0, 2.5, 3.0, 4.0, 5.0\}$.

In \cref{fig:est_fisher}, we plot the empirical parameter estimation error $\mathbb{E}[\|\hat{\Theta} - \Theta\|_F^2]$
against the CR bound $\sum_{i \in [E]}\mathrm{tr}(\mathbf{J}_i^{-1})$  across different parameter configurations of $\mathscr{M}$.
The estimation error increases with the CR bound, consistent with \cref{crbound1}.
\begin{figure}[h!]
    \centering
    \includegraphics[width=\columnwidth]{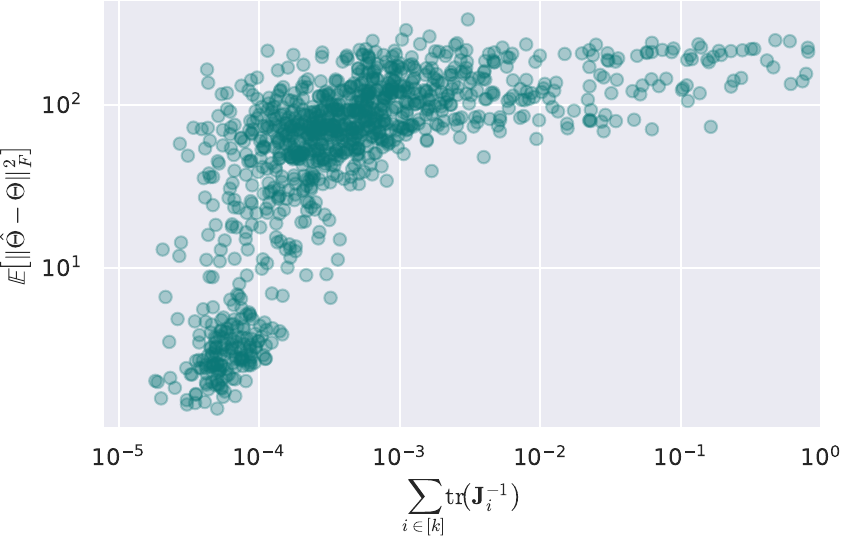}
    \caption{Empirical parameter estimation error vs. CR bound.}
    \label{fig:est_fisher}
\end{figure}

\cref{fig:ent_fisher} shows the irreducible prediction error $\sum_{i \in [E]}\sum_{t \in \Phi_i} h(p_i^t)$ against the same CR bound. The sharp inverse relationship reproduces the tradeoff. The causal model for which parameters are estimated most accurately also has the largest irreducible prediction error.
\begin{figure}[h!]
    \centering
    \includegraphics[width=\columnwidth]{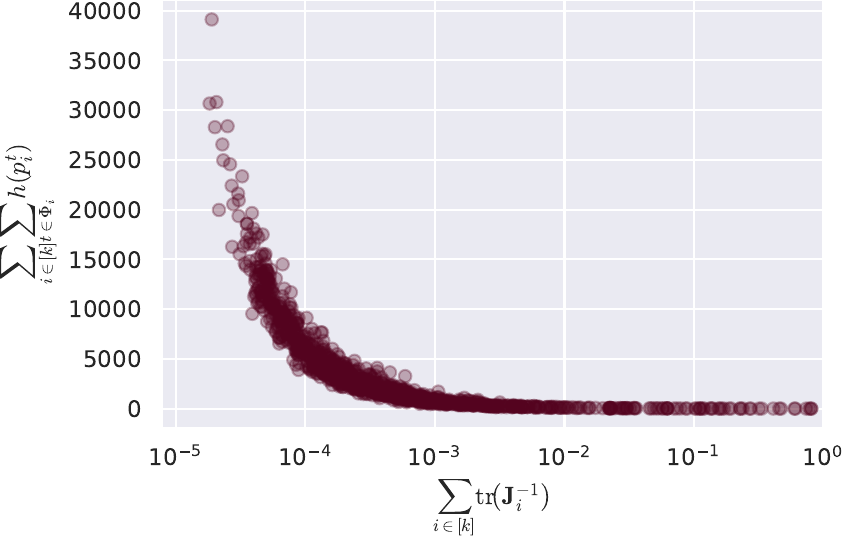}
    \caption{Irreducible prediction error vs. CR bound. }
    \label{fig:ent_fisher}
\end{figure}

Standard metrics may therefore conflate model performance with irreducible uncertainty and rank models by the entropy of the dataset rather than their ability to recover the data-generating mechanism, which is why temporal graph models with explicit generative structure matter for principled benchmarking.

\section{Conclusion}
\label{sec:conclusion}

In the probabilistic temporal graph model constructed in this paper, learning and prediction are limited by different aspects of the same data-generating process. Higher Fisher information lowers the CR bound on the parameter estimation error, but at the same time, coincides with higher entropy, which makes predictions intrinsically difficult.

We define a probabilistic causal generator with transient edges and known ground-truth parameters, enabling joint evaluation of predictive error and causal parameter recovery. The experiments confirm the relationship between estimation error and irreducible predictive error.

Therefore, predictive accuracy alone is not a reliable proxy for learning the causal mechanisms in temporal data. Temporal graph benchmarks should separate reducible model error from irreducible uncertainty and evaluate parameter recovery whenever ground truth is available. We hope this work motivates the temporal graph learning community \citep{yitgb} to integrate causality into model evaluation and look beyond standard predictive metrics.

Our construction fixes the link function to be affine in the parameters, which guarantees per-trigger comonotonicity between the Fisher information and entropy by \cref{prop:g_condition}; the aggregate estimation--prediction tradeoff shown in \cref{sec:results} is validated empirically rather than following automatically from this per-trigger property (\cref{sec:rctg}). 

The link function can also be parametrised by a neural network to define non-linear SEMs (see Appendix~\ref{app:nn}). The parameters of such a network can be deliberately initialised so that the model violates \cref{eq:comonotone_condition}, providing a constructive setting in which the estimation--prediction tradeoff may not hold.

\bibliography{causality}
\bibliographystyle{icml2023}

\appendix
\section{Fisher Information}
\label{app:fisher_scalar}

For the model in \cref{blm}, the log-likelihood of a single observation $x$ is
\begin{align}
    \ell(\theta; x) = x \log{p(\theta)} + (1-x)\log{(1 - p(\theta))}.
\end{align}
Differentiating with respect to $\theta$ and applying the chain rule through $\sigma(\cdot)$,
\begin{align}
    J(\theta) &= \mathbb{E}\left[\left(\frac{\partial \ell(\theta; x)}{\partial \theta}\right)^2\right] \nonumber \\
    &= \mathbb{E}\left[(x-p(\theta))^2 \left(\frac{\partial g(\theta)}{\partial \theta}\right)^2\right] \nonumber\\
    &= p(\theta)(1-p(\theta)) \left(\frac{\partial g(\theta)}{\partial \theta}\right)^2,
\end{align}
where the last step uses $\mathrm{Var}(X) = p(\theta)(1-p(\theta))$ and the fact that $\frac{\partial g(\theta)}{\partial \theta}$ is deterministic given $\theta$.

\section{Proof of \cref{prop:g_condition}}
\label{app:g_condition}

From \cref{eq:fisher_scalar}, $J(\theta) = \phi(p(\theta))\,g'(\theta)^2$ with $\phi(p)=p(1-p)$. Differentiating via the product and chain rules, and using $p'(\theta) = \phi(p(\theta))\,g'(\theta)$,
\begin{align}
    \frac{dJ}{d\theta} &= \phi'(p)\,p'(\theta)\,g'(\theta)^2 + \phi(p)\cdot 2g'(\theta)g''(\theta) \nonumber \\
    &= \phi'(p)\,\phi(p)\,g'(\theta)^3 + 2\phi(p)\,g'(\theta)g''(\theta) \nonumber \\
    &= \phi(p)\,g'(\theta)\big[(1-2p)\,g'(\theta)^2 + 2g''(\theta)\big],
    \label{eq:dJ_dtheta}
\end{align}
For the entropy side, $h'(p) = \log\frac{1-p}{p}$ has the same sign as $1-2p$ for all $p\in(0,1)$, so
\begin{align}
    \frac{dh}{d\theta} = h'(p(\theta))\,p'(\theta) = h'(p(\theta))\,\phi(p(\theta))\,g'(\theta),
\end{align}
and since $\phi(p)>0$, $\mathrm{sign}(dh/d\theta) = \mathrm{sign}(g'(\theta))\cdot\mathrm{sign}(1-2p(\theta))$. Comparing signs of $dJ/d\theta$ and $dh/d\theta$ (both carry an overall factor $\phi(p)g'(\theta)$ of sign $\mathrm{sign}(g'(\theta))$, since $\phi(p)>0$) reduces comonotonicity to \cref{eq:comonotone_condition}.

\newpage
\section{Proof of \cref{prop:crb}}
\label{app:proof_crb}

We proceed in two steps: first deriving the CR bound for a single edge $i \in [E]$, then summing over all edges.

\paragraph{Single edge}
Fix $i \in [E]$. At each trigger $t \in \Phi_i$, the activation $X_i(t) \in \{0,1\}$ is a Bernoulli random variable with success probability $p_i^t = \sigma(\Theta_{:,i}^\top \rvx'(t))$, conditioned on the context $\rvx'(t)$. The
log-likelihood contribution of a single observation is
\begin{align}
    \ell\!\left(\Theta_{:,i}; X_i(t)\right) = X_i(t) \log p_i^t +
    \left(1 - X_i(t)\right) \log\!\left(1 - p_i^t\right).
\end{align}
Differentiating with respect to $\Theta_{:,i}$ and applying the identity
$\sigma'(z) = \sigma(z)(1 - \sigma(z))$, the score is
\begin{align}
    \frac{\partial \ell}{\partial \Theta_{:,i}} = \left(X_i(t) - p_i^t\right)\rvx'(t).
\end{align}
The Fisher information matrix for a single observation is
\begin{align}
    \rmJ_i^t &= \mathbb{E}\left[\frac{\partial \ell}{\partial \Theta_{:,i}}
    \frac{\partial \ell}{\partial \Theta_{:,i}}^\top \right] = \mathbb{E}\left[\left(X_i(t) - p_i^t\right)^2\right] \rvx'(t)\rvx'(t)^\top \nonumber \\
    &= p_i^t\left(1 - p_i^t\right) \rvx'(t)\rvx'(t)^\top,
\end{align}
where we have used $\mathrm{Var}(X_i(t)) = p_i^t(1-p_i^t)$ and the fact that $\rvx'(t)$ is deterministic given $\Theta_{:,i}$. Since the $N_i = |\Phi_i|$ observations are independent, the total Fisher information matrix accumulates as
\begin{align}
    \rmJ_i = \sum_{t \in \Phi_i} p_i^t\left(1 - p_i^t\right) \rvx'(t)\rvx'(t)^\top.
\end{align}
Applying the multivariate CR bound~\citep[\textsection~11.10]{cover2012elements} to any unbiased estimator $\hat{\Theta}_{:,i}$,
\begin{align}
    \mathbb{E}\left[\left\lVert \hat{\Theta}_{:,i} - \Theta_{:,i} \right\rVert^2\right]
    \geq \mathrm{tr}\!\left(\rmJ_i^{-1}\right).
\end{align}

\paragraph{All edges}
The optimisation problem in \cref{eq:estimation} is solved for each $i \in [E]$, and the total squared error decomposes as
\begin{align}
    \mathbb{E}\left[\left\lVert \hat{\Theta} - \Theta \right\rVert_F^2\right]
    = \sum_{i \in [E]} \mathbb{E}\left[\left\lVert \hat{\Theta}_{:,i} -
    \Theta_{:,i} \right\rVert^2\right],
\end{align}
where we have used the fact that the Frobenius norm decomposes column-wise. Applying the per-edge CR bound to each term and summing gives the stated lower bound.

\newpage
\section{Algorithm}
\label{app:algo}

\begin{algorithm}[h!]
\caption{Sampling from $\mathcal{M}(n,m,p,\sigma^2,T, \bm{\lambda})$}
\label{alg:causal-temporal-graph}
\begin{algorithmic}[1]
\REQUIRE Number of nodes $n$, parameters $(m,p)$, variance $\sigma^2$, time horizon $T$, rates $\bm{\lambda}$
\ENSURE Temporal graph $\gG$ and ground-truth causal model $(\rmA^\graph,\Theta, \bm{\Phi})$

\STATE Set $E \gets {n \choose 2}$ and initialize $\gG \gets \emptyset$
\STATE Sample a binomial random intersection graph on $E$ edge variables with $m$ objects and probability $p$
\STATE Orient each edge independently to obtain $\rmA$

\STATE Initialize $\Theta \gets \bm{0}_{E \times E}$

\FOR{$i,j \in [E]$ such that $\rmA_{j,i}=1$}
    \STATE Sample $\Theta_{j,i} \sim \mathcal{N}(0,\sigma^2)$
\ENDFOR

\FOR{$i \in [E]$}
    \STATE Sample trigger times $\Phi_i \sim \mathrm{PPP}(\lambda_i)$ on $[0,T)$
\ENDFOR

\STATE Initialize $\rvx'(0) \gets \bm{0}$
\STATE Sort all trigger events
$\bigcup_{i \in [E]} \{(i,t): t \in \Phi_i\}$ in increasing time order

\FOR{each trigger event $(i,t)$ in chronological order}
    \STATE $\rvx'(t)$ is the edge-state vector immediately before $t$
    \STATE Set $p_i^t \gets \sigma(\Theta_{:,i}^{\top}\rvx'(t))$
    \STATE Sample $x_i(t) \sim \mathrm{Bernoulli}(p_i^t)$

    \IF{$x_i(t)=1$ and $x'_i(t)=0$}
        \STATE Record activation time $s_i \gets t$ for edge $i$
    \ELSIF{$x_i(t)=0$ and $x'_i(t)=1$}
        \STATE Add interval $(e_i,s_i,t)$ to $\gG$
    \ENDIF
\ENDFOR

\FOR{each edge $i$ with $x'_i(T)=1$}
    \STATE Add interval $(e_i,s_i,T)$ to $\gG$
\ENDFOR

\STATE \textbf{return} $\gG$ and $(\rmA^{\graph},\Theta,\Phi)$
\end{algorithmic}
\label{algo1}
\end{algorithm}

\newpage
\section{Future Work}
\label{app:nn}
A natural extension would be to replace the simple parametric structural equations with neural networks, while retaining a prescribed causal graph $\rmA$ from $\gX'$ to $\gX$. This would allow temporal graphs to be generated from more flexible nonlinear causal mechanisms.

We formalize the problem of constructing a masked neural network \cite{germain2015made} whose induced input-output connectivity exactly matches a prescribed causal adjacency matrix $\rmA$.

Let the causal graph be defined over $E$ input nodes and $E$ output nodes, with adjacency matrix $A \in \{0,1\}^{E \times E}$, where $A_{j,i} = 1$ if and only if input node $j$ is a parent of output node $i$. We seek a neural network with $L$ hidden layers, indexing the input layer as layer $0$ and the output layer as layer $L+1$. Let $n_l$ denote the number of nodes in hidden layer $l$, for $l \in [L]$, with $n_0 = n_{L+1} = E$.

The number of nodes in every hidden layer must be at least $E$ to prevent causal collapse,
\begin{align}
    n_l \geq E, \quad \forall l \in [L].
    \label{eq:width_constraint}
\end{align}

For each consecutive pair of layers $(l, l+1)$, let
\begin{align}
    \rmA^{(l, l+1)} \in \{0,1\}^{n_l \times n_{l+1}}
\end{align}
denote the bipartite adjacency matrix specifying which connections between layer $l$ and layer $l+1$ are retained.

The composition of layerwise connectivity across all hidden layers must reproduce the prescribed causal adjacency $\rmA$ exactly, under Boolean matrix composition:
\begin{align}
    \rmA^{(0,1)} \circ \rmA^{(1,2)} \circ \cdots \circ \rmA^{(L, L+1)} = \rmA,
    \label{eq:composition_constraint}
\end{align}
where $\circ$ denotes Boolean matrix multiplication, i.e. for matrices $\rmB \in \{0,1\}^{p \times q}$ and $\rmC \in \{0,1\}^{q \times r}$,
\begin{align}
    (\rmB \circ \rmC)_{j,i} = \bigvee_{k=1}^{q} \left( \rmB_{j,k} \wedge \rmC_{k,i} \right).
\end{align}
The constraint in \cref{eq:composition_constraint} encodes the requirement that a path exists from input node $j$ to output node $i$, through some sequence of unmasked connections across the $L$ hidden layers, if and only if $\rmA_{j,i} = 1$.

Given $\rmA$ and $L$, find a tuple (if exists)
\begin{align}
    \left( n_1, \ldots, n_L, \, \rmA^{(0,1)}, \rmA^{(1,2)}, \ldots, \rmA^{(L,L+1)} \right)
\end{align}
satisfying \cref{eq:width_constraint,eq:composition_constraint}, other than the trivial solution 
\begin{align}
    n_l = E; \; \rmA^{(l,l+1)} = \rmI, \quad \forall l \in [L].
\end{align}

\end{document}

%% file: math_commands.tex
\usepackage{amsmath,amsfonts,bm}
\usepackage{pifont}

\usepackage{amssymb}
\usepackage{mathtools}
\usepackage{adjustbox}
\usepackage{rotating}

\usepackage{mathrsfs}

\RequirePackage{algorithm}
\RequirePackage{algorithmic}

\def\eqref#1{equation~\ref{#1}}

\def\1{\bm{1}}

\def\rvx{{\mathbf{x}}}

\def\rmA{{\mathbf{A}}}
\def\rmB{{\mathbf{B}}}
\def\rmC{{\mathbf{C}}}

\def\rmI{{\mathbf{I}}}
\def\rmJ{{\mathbf{J}}}

\DeclareMathAlphabet{\mathsfit}{\encodingdefault}{\sfdefault}{m}{sl}
\SetMathAlphabet{\mathsfit}{bold}{\encodingdefault}{\sfdefault}{bx}{n}

\def\gB{{\mathcal{B}}}

\def\gG{{\mathcal{G}}}

\def\gM{{\mathcal{M}}}
\def\gN{{\mathcal{N}}}

\def\gU{{\mathcal{U}}}
\def\gV{{\mathcal{V}}}

\def\gX{{\mathcal{X}}}

\newcommand{\parents}{Pa} 
\newcommand{\pa}{pa}
\newcommand{\graph}{\mathscr{G}}

\DeclareMathOperator*{\argmin}{arg\,min}

\usepackage{amsthm}
\theoremstyle{theorem}

\newtheorem{proposition}{Proposition}[section]

\theoremstyle{definition}
\newtheorem{definition}{Definition}

\theoremstyle{remark}
\newtheorem*{remark}{Remark}

\newenvironment{myquote}
{\begin{quote}\itshape\vspace{-5pt}}
{\vspace{-5pt}\end{quote}}